\documentclass[letterpaper]{article} 
\usepackage{aaai2027}  
\usepackage[hyphens]{url}  
\usepackage{graphicx} 
\usepackage{natbib}  
\usepackage{caption} 
\usepackage{algorithm}
\usepackage{algorithmic}

\usepackage{newfloat}
\usepackage{listings}
\DeclareCaptionStyle{ruled}{labelfont=normalfont,labelsep=colon,strut=off} 
\floatstyle{ruled}
\newfloat{listing}{tb}{lst}{}
\floatname{listing}{Listing}

\usepackage{booktabs}
\usepackage{amsthm}
\newtheorem{theorem}{Theorem}
\usepackage{amsmath}
\usepackage{amsfonts}
\usepackage{amssymb}
\usepackage{nicefrac}
\usepackage{microtype}
\usepackage{pifont}   
\usepackage{multirow}
\usepackage{tabularx}
\usepackage{colortbl}
\usepackage{makecell}
\definecolor{ourslitebg}{RGB}{234,243,255}
\definecolor{oursprobg}{RGB}{235,248,240}
\usepackage[utf8]{inputenc} 
\nocopyright

\title{LeapTalk: Breaking the Latency–Quality Trade-off in Talking Head Generation}

\author{
    Rongxiang Zhang\textsuperscript{\rm 1,\rm 2}, 
    Songhua Liu\textsuperscript{\rm 1}\corresponding
}
\affiliations{
    \textsuperscript{\rm 1}School of Artificial Intelligence, Shanghai Jiao Tong University\\
    \textsuperscript{\rm 2}Harbin Institute of Technology\\

    zrongxiang1@gmail.com, liusonghua@sjtu.edu.cn
}

\begin{document}

\maketitle

\begin{abstract}
Long-form and real-time talking-head generation remains challenging due to a latency-quality trade-off: inefficient multi-step diffusion prohibits streaming generation, whereas real-time autoregressive approaches suffer from error accumulation and identity drift. 
To address this drawback, we propose LeapTalk, a novel framework that achieves stable and real-time talking-head generation with a single forward step, scaling to arbitrarily long videos. 
At the heart of our approach lies a single-step bridge distillation scheme. 
On the one hand, departing from the conventional \emph{noise-to-data} paradigm, we introduce a \emph{data-to-data} transport formulation based on a Brownian bridge. 
Anchored by a persistent reference, this strategy effectively mitigates identity drift and enhances long-term temporal stability. 
On the other hand, to enable smooth knowledge transfer from a pre-trained diffusion teacher to the student bridge model, we explore a heterogeneous distillation framework with an SNR-aligned time transformation $\Phi(\tau)$, which bridges the functional discrepancy between the two models.  
Moreover, we propose an audio-driven classifier-free guidance mechanism to maintain fine-grained lip synchronization under extreme step reduction. 
Extensive experiments demonstrate that our method achieves high-fidelity and temporally consistent video generation with only \textbf{1 step} at up to \textbf{\textit{200 FPS}}, significantly outperforming existing approaches in both efficiency and stability.  \textbf{Project Page:} \url{https://zhangrongxiang.github.io/leaptalk-page/}
\end{abstract}


\section{Introduction}
Audio-driven avatar video generation seeks to produce photorealistic human-centric videos conditioned on a reference image and an audio signal, where facial expressions and body motions are temporally synchronized with speech ~\citep{tu2025stableavatar,weng2026eartalkingendtoendgptstyleautoregressive}.
This capability enables a wide range of applications, including digital humans, virtual assistants, and content creation in film and media. However, translating these capabilities into practical, real-time systems remains challenging.

Despite promising performance, we notice that existing methods face a fundamental trade-off between \textit{latency} and \textit{generation quality.}
 \textbf{ (a) High latency and limited length.} Diffusion-based approaches~\citep{gao2025wan,zhong2025anytalker,guo2024liveportrait,yang2025infinitetalk,tu2025stableavatar}. remain computationally expensive due to iterative denoising, typically generating only short clips (5--10s). 
Although methods like StableAvatar~\citep{tu2025stableavatar} enable long-form synthesis, they operate in an offline manner and cannot support real-time interaction.
\textbf{(b) Error accumulation.} To improve efficiency, autoregressive methods~\citep{shen2025soulx,yu2026soulxflashheadoracleguidedgenerationinfinite,huang2025selfforcing,mazumdar2026temposyncdiffdistilledtemporallyconsistentdiffusion} convert multi-step diffusion into few-step models via distillation techniques and adopt streaming generation. 
However, their reliance on previously generated frames introduces exposure bias, causing errors to accumulate over time and leading to identity corruption and degraded visual quality in long sequences~\citep{zhu2026causalforcingautoregressivediffusion,yu2026soulxflashheadoracleguidedgenerationinfinite,chern2025livetalk}.
\textbf{(c) Loss of facial details.}  While distribution matching distillation (DMD)~\citep{yin2024one} improves efficiency and partially mitigates exposure bias~\citep{zhu2026causalforcingautoregressivediffusion,chern2025livetalk}, it reduces sampling steps and sacrifices high-frequency details. 
This results in degraded facial dynamics, particularly inaccurate or ambiguous lip synchronization under few-step settings. In summary, diffusion processes achieve high visual fidelity but suffer from prohibitive latency, while  autoregressive streaming approaches often compromise long-term stability and fine-grained details.
\begin{figure*}[t]
  \includegraphics[width=\textwidth]{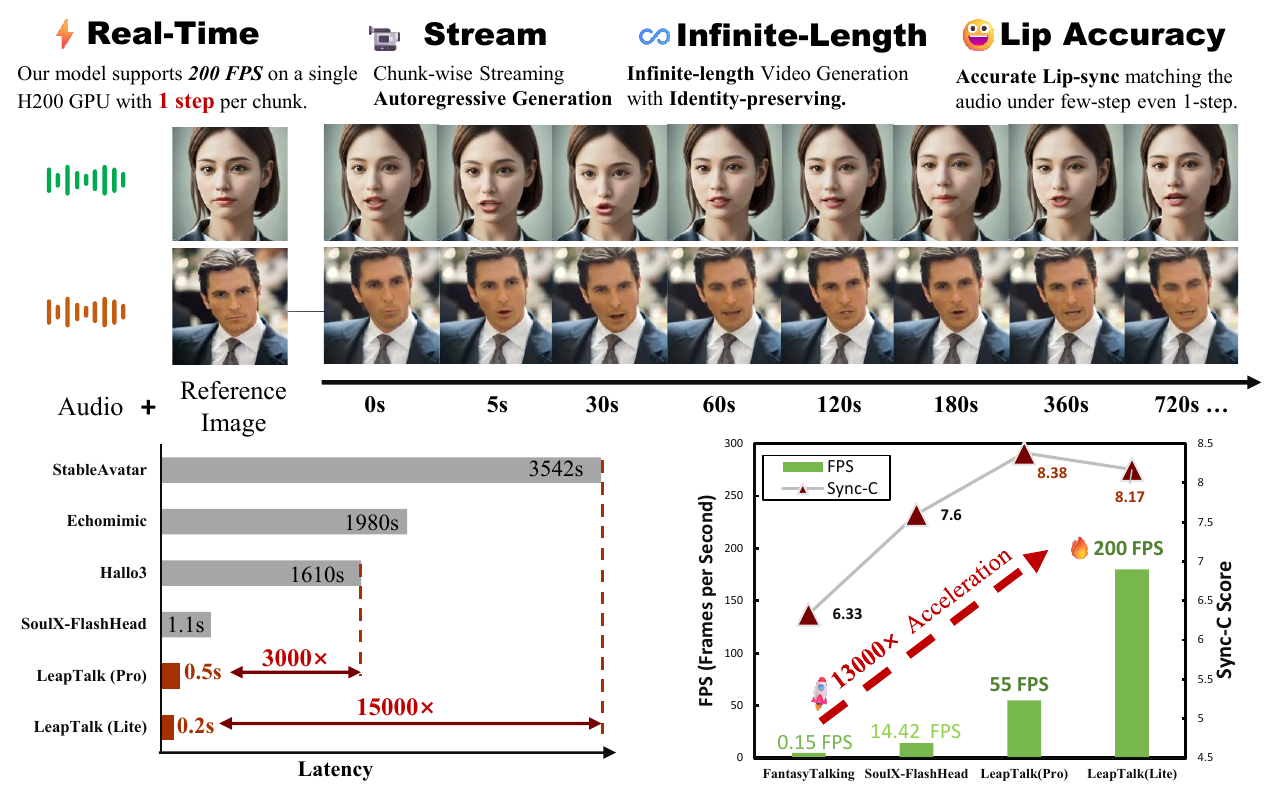}
\caption{\textbf{Overview of LeapTalk.} Given audio and a reference image, our method enables open-ended streaming talking-head generation with consistent identity. It achieves \textbf{1-step} inference per chunk at up to \textbf{200 FPS}, delivering up to $\mathbf{15000\times}$ speedup while maintaining strong lip-sync accuracy.}
  \label{fig:teaser}
\end{figure*}

To address this trade-off, we propose \textbf{LeapTalk}, a bridge-based autoregressive framework for real-time talking-head generation. 
LeapTalk is, to our knowledge, the first approach that enables stable open-ended and real-time streaming talking head generation, where each video chunk is synthesized in  \textbf{1 step} while preserving accurate lip synchronization, fine-grained facial details, and strong identity consistency.

Our key insight is to reformulate talking-head generation as a \emph{data-to-data} transport process via a Brownian bridge model, implemented through a novel \emph{Bridge Forcing} paradigm.
By introducing a persistent visual anchor $\mathcal{I}$ as the starting point of generation, the trajectory is grounded to a fixed identity reference, substantially mitigating long-term drift and error accumulation.
To enable efficient training under heterogeneous generative paradigms, we propose a \emph{heterogeneous DMD} scheme that distills a flow-matching~\citep{lipman2023flowmatchinggenerativemodeling} teacher into a bridge student, where a time transformation function aligns their signal-to-noise ratios for score matching. 
Finally, to preserve fine-grained facial dynamics under few-step generation, we introduce an \emph{audio-driven classifier-free guidance (CFG) augmentation}, which enhances lip-sync accuracy and enhance motion diversity, prevents detail collapse and static motion in one-step generation.
In summary, our key contributions include:
\begin{itemize}
\item We propose \textbf{LeapTalk}, a real-time, streaming talking-head generation framework that achieves \textbf{stable one-step inference} with up to \textbf{200 FPS} on a single GPU, while maintaining high visual fidelity and long-term identity consistency.

\item We reformulate talking-head generation as a \textbf{\emph{data-to-data}} transport problem via a Brownian bridge mathematical formulation, and introduce Bridge Forcing to reduce error accumulation and identity drift in autoregressive generation.

\item We propose a heterogeneous DMD framework to distill a flow-matching teacher into a Brownian-bridge student including two key components: \textbf{(i) an SNR-aligned time transformation $\Phi(\tau)$} that resolves trajectory mismatch and enables well-defined score matching, and \textbf{(ii) an audio-driven CFG augmentation mechanism} that preserves motion quality under few-step generation.
\item  Extensive experiments demonstrate that our bridge-based method achieves competitive or superior performance across multiple benchmarks, while significantly improving efficiency and enabling high-quality, open-ended generation in the extreme one-step regime.
\end{itemize}

\section{Preliminaries}
\subsection{Probability Path Modeling and Rectified Flow}
Probability-path models~\citep{NEURIPS2021_940392f5,lipman2023flowmatchinggenerativemodeling,liu2022flowstraightfastlearning} generate data by transporting a source distribution $p_0$ to a target distribution $p_1$ over continuous time. A common formulation is the SDE~\citep{song2021scorebasedgenerativemodelingstochastic}
\begin{equation}\label{eq:sde}
dX_t = v(X_t, t)\, dt + \sigma(t)\, dW_t,\quad t \in [0,1],
\end{equation}
where $X_0 \sim p_0$, $X_1 \sim p_1$, $v$ is the velocity field, $\sigma(t)$ controls diffusion strength, and $W_t$ is Brownian motion.

Rectified Flow~\citep{liu2022flowstraightfastlearning} is the deterministic limit of Eq.~\ref{eq:sde} with $\sigma(t)=0$. It typically transports Gaussian noise $x_0\sim\mathcal{N}(0,I)$ to data $x_1\sim p_1$ along
\begin{equation}
    x_t=(1-t)x_0+tx_1.
\end{equation}
The corresponding target velocity is constant: $u_t=x_1-x_0$. 

\subsection{Brownian Bridge}
 Instead of starting from pure noise, Brownian bridge connects two fixed endpoints $(x_0,x_1)$ while retaining stochasticity along the path. Conditioned on $x_0 \sim p_0$ and $x_1 \sim p_1$, the intermediate state follows
\begin{equation}\label{bb}
X_t \mid _{(x_0, x_1)} \sim \mathcal{N}\left((1-t)x_0+t x_1,\; t(1-t)I\right).
\end{equation}
Equivalently, samples can be written as
\begin{equation}{\label{bl}}
    X_{t} = (1 - t) X_0 + t X_1 + \sqrt{t(1 - t)} \epsilon, \quad t \in [0,1],
\end{equation}
where $\epsilon \sim \mathcal{N}(0,I)$. The bridge preserves both endpoints while injecting uncertainty only in the middle of the trajectory, making it effective for paired data-to-data problems~\citep{tan2025vision}. Its conditional velocity field is
\begin{equation}
u_t(X_t \mid x_0, x_1) = \frac{x_1 - X_t}{1 - t}.
\end{equation}
A neural network $v_\theta(X_t,t)$ can then be trained by velocity matching:
\begin{equation}
\label{eq:bridgeloss}
\mathcal{L}(\theta) = \mathbb{E}_{(x_0, x_1),\, t,\, X_t} 
\left[
\left\| v_\theta(X_t, t) - \frac{x_1 - X_t}{1 - t} \right\|^2
\right],
\end{equation}
where $t \sim \mathcal{U}(0,1)$ and $X_t$ is sampled from Eq.~\ref{bl}. 

Prior bridge-based generative models~\citep{wang2025framebridge, tan2025vision} have demonstrated the effectiveness of Brownian bridge formulations, but they still rely on multi-step sampling and offline inference. In contrast, LeapTalk achieves bridge-based stable one-step streaming generation with strong long-term consistency via heterogeneous distillation.

\section{Methods}

\subsection{Reference-Anchored  Bridge Forcing}
\label{sec:reference_anchored_ar}

\noindent\textbf{Limitations of Noise-to-Data Autoregressive Diffusion.}
Following the formulations of recent works~\citep{huang2025selfforcing, zhu2026causalforcingautoregressivediffusion}, standard autoregressive (AR) video diffusion models factorize the joint distribution of a sequence $X^{1:L}$ into a product of conditional distributions:
\begin{equation}
    p(X^{1:L}) = \prod_{k=1}^{K} p(C_k \mid C_{<k}, \mathbf{c}_k),
\end{equation}
where each chunk $C_k$ is generated via noise-to-data flow matching from Gaussian noise conditioned on $C_{<k}$. 
Such repeated reconstruction from noise leads to error accumulation, causing identity drift and visual degradation in long sequences.

\noindent\textbf{Motivation.}
In talking-head generation, frames share identity and differ mainly in motion, which motivates modeling generation as a \textbf{\emph{data-to-data}} transport from the reference image $\mathcal{I}$ to target frames.

We propose \textbf{Bridge Forcing}, which replaces noise-to-data reconstruction with a reference-anchored Brownian Bridge. The AR process becomes
\begin{equation}
    p(X^{1:L}) = \prod_{k=1}^{K} p(C_k \mid C_{<k}, \mathcal{I}, \mathbf{c}_k),
\end{equation}
where each chunk following the bridge trajectory (Eq.~\ref{bl}) from the reference image $\mathcal{I}$ to target frame $X_1$ :
\begin{equation}
{X}^{(n)}_t = (1-t)\mathcal{I} + t X_1 + \sqrt{t(1-t)}\,\epsilon,\quad \epsilon \sim \mathcal{N}(0,I).
\end{equation}

\noindent\textbf{Chunk Construction under Bridge Forcing.}
For each chunk, we initialize the sequence by repeating the reference image $\mathcal{I}$ across all frames, ensuring a clean and identity-consistent starting point. To maintain temporal continuity, we replace the first $K$ frames with the last $K$ frames from the previous chunk, which serve as motion prefix. The resulting chunk input is constructed as
\begin{equation}
C_k^{\text{in}} = \big[\, C_{k-1}^{(L-K+1:L)},\; \underbrace{\mathcal{I}, \ldots, \mathcal{I}}_{L-K} \,\big],
\end{equation}
where $C_{k-1}^{(L-K+1:L)}$ denotes the last $K$ frames of the previous chunk. This design preserves motion continuity while anchoring identity at every chunk.
Formally, let $\mathbf{X}^{(n-1)}$ denote the previous chunk and $\tilde{\mathbf{X}}^{(n)}$ the initialized sequence for the current chunk. The constructed input $\mathbf{X}^{(n)}$ is given by
\begin{equation}
\mathbf{X}^{(n)} = \big[
\mathbf{X}^{(n-1)}_{T-K:T},\;
\underbrace{\mathcal{I}, \dots, \mathcal{I}}_{T-K}
\big],
\end{equation}
where $T$ is the chunk length. This prefix replacement design enforces identity preservation while enabling smooth temporal stitching between consecutive chunks.
 \begin{figure}[h!]
    \centering
    \includegraphics[width=1\linewidth]{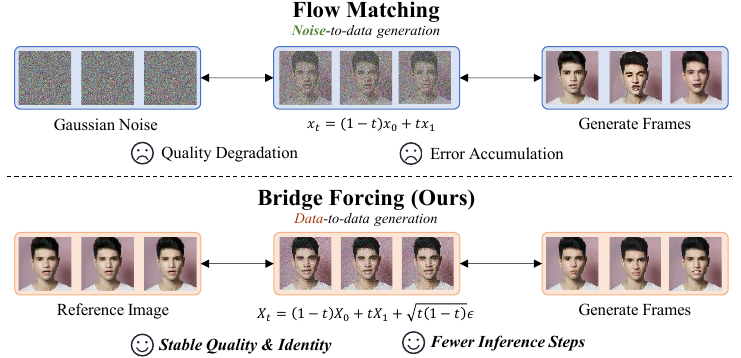}
    \caption{\textbf{Comparison between conventional noise-to-data flow matching and our  Bridge Forcing paradigm.} The identity consistency is preserved along bridge process, while error accumulates in the forward diffusion process.}
    \label{fig:bridge_comparison}
\end{figure}

\subsection{Heterogeneous Distribution Matching Distillation}
Real-time talking-head generation requires autoregressive generation with few-step (ideally 1 step) synthesis. We propose a heterogeneous DMD (See Figure ~\ref{fig:distill}) scheme with SNR-aligned time transformation and audio-driven CFG enables high-quality generation.

\begin{figure*}[ht!]
    \centering
    \includegraphics[width=1\linewidth]{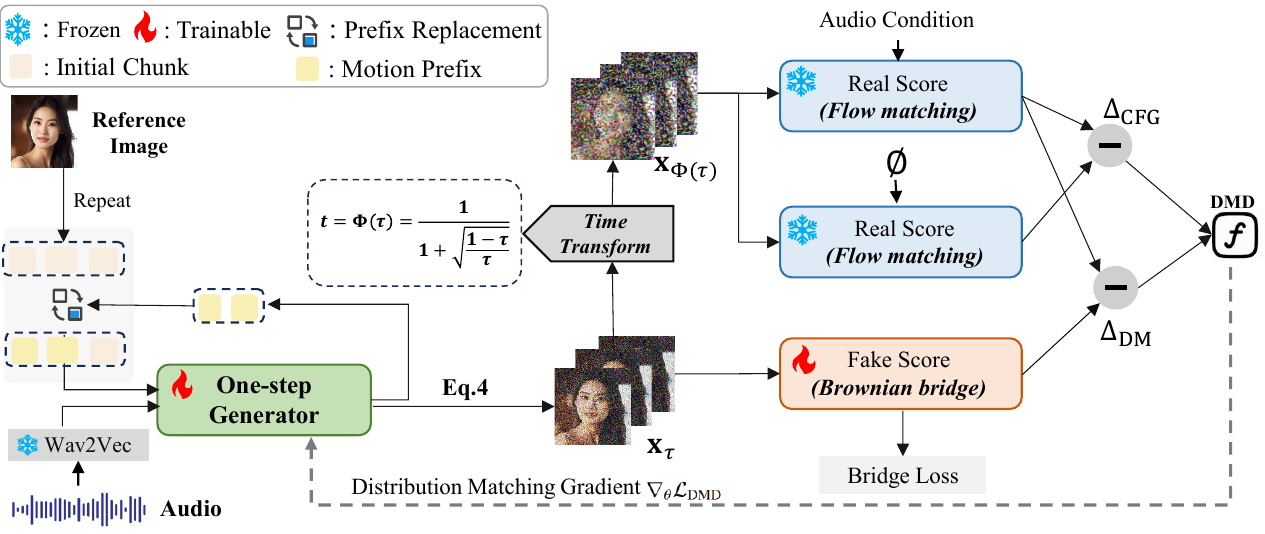}
    \caption{\textbf{Distillation pipeline of our approach.} 
The one-step student generates frames conditioned on a static reference. 
To enable stable distillation across heterogeneous processes, we apply a time transformation $t=\Phi(\tau)$ to align noise levels between the flow-matching teacher and the bridge-based student, allowing consistent score supervision and a well-defined DMD objective.}
    \label{fig:distill}
\end{figure*}
\noindent\textbf{Time Transformation. }Standard Distribution Matching Distillation minimizes the Kullback–Leibler (KL) divergence, and the gradient takes the following form:
\begin{equation}\label{eq:dmd_grad}
\nabla_{\theta} \mathcal{L}_{\text{DMD-theory}} = \mathbb{E}_{z_t, \tau, \mathbf{x}_{\tau}} \left[ 
- \left( s_{\text{cond}}^{\text{real}}(\mathbf{x}_{\tau}) - s_{\text{cond}}^{\text{fake}}(\mathbf{x}_{\tau}) \right) 
\frac{\partial G_{\theta}}{\partial \theta} 
\right].
\end{equation}
where $s_{\text{cond}}^{\text{real}}$ and $s_{\text{cond}}^{\text{fake}}$ denote the teacher and student score functions evaluated at the same noise level. And $G_\theta$ is the one-step generator.

This formulation assumes that teacher and student share an identical forward process. In our setting, this assumption is violated: the teacher follows a flow-matching trajectory, while the student is defined on a Brownian bridge, resulting in mismatched noise levels at the same timestep. Consequently, the same timestep corresponds to different signal-to-noise ratios (SNRs), leading to inconsistent noise levels and rendering the score difference ill-defined.

To resolve this issue,  we construct a time transformation function $\Phi(\tau)$ to align the two processes through SNR matching. 

\begin{theorem}[SNR-aligned Time Transformation Function]
Let $\text{SNR}_{\text{teacher}}(t)$ and $\text{SNR}_{\text{student}}(\tau)$ denote the SNR of the flow-matching teacher and the bridge student, respectively. There exists a monotonic time transformation $t = \Phi(\tau)$ that aligns the two processes such that
\begin{equation}
\text{SNR}_{\text{teacher}}(t) = \text{SNR}_{\text{student}}(\tau).
\end{equation}
The corresponding closed-form solution is given by
\begin{equation}
t = \Phi(\tau) = \frac{1}{1 + \sqrt{\frac{1-\tau}{\tau}}}.
\end{equation}
\end{theorem}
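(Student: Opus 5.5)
The plan is to make the two SNRs explicit from the marginal forms in the Preliminaries, equate them, and solve for $t$. For the flow-matching teacher, the rectified-flow interpolant $x_t=(1-t)x_0+tx_1$ with $x_0\sim\mathcal{N}(0,I)$ has signal coefficient $t$ on the data $x_1$ and noise standard deviation $1-t$. I will therefore take
\begin{equation*}
\text{SNR}_{\text{teacher}}(t)=\frac{t^2}{(1-t)^2},\qquad t\in(0,1).
\end{equation*}
For the bridge student, Eq.~\ref{bl} with $X_0=\mathcal{I}$ gives $X_\tau=(1-\tau)\mathcal{I}+\tau X_1+\sqrt{\tau(1-\tau)}\,\epsilon$. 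The signal coefficient on the target $X_1$ is $\tau$ and the noise standard deviation is $\sqrt{\tau(1-\tau)}$, so
\begin{equation*}
\text{SNR}_{\text{student}}(\tau)=\frac{\tau^2}{\tau(1-\tau)}=\frac{\tau}{1-\tau},\qquad \tau\in(0,1).
\end{equation*}

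The step I expect to be the main obstacle is justifying this SNR for the bridge. It requires treating the anchor term $(1-\tau)\mathcal{I}$ as a known, deterministic offset: the reference is given as a condition, so it carries no information about $X_1$ and contributes no noise. I will argue this by noting that subtracting the known quantity $(1-\tau)\mathcal{I}$ from $X_\tau$ is an invertible affine map. It leaves the posterior of $X_1$ given $X_\tau$ unchanged and reduces the student marginal to the form ``$\tau X_1+\sigma(\tau)\epsilon$'', which is the same signal-plus-Gaussian-noise form as the teacher. The score comparison in Eq.~\ref{eq:dmd_grad} is made at matched noise levels, so I will phrase ``noise level'' through this ratio of squared signal coefficient to noise variance.

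Next I will set the two SNRs equal and solve. Since $t,\tau\in(0,1)$, both sides are positive, and taking square roots gives
\begin{equation*}
\frac{t}{1-t}=\sqrt{\frac{\tau}{1-\tau}}.
\end{equation*}
Solving this linear equation for $t$ gives
\begin{equation*}
t=\frac{\sqrt{\tau/(1-\tau)}}{1+\sqrt{\tau/(1-\tau)}}=\frac{1}{1+\sqrt{(1-\tau)/\tau}},
\end{equation*}
which is the claimed $\Phi(\tau)$.

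Finally I will verify existence and monotonicity directly. The map $\tau\mapsto(1-\tau)/\tau$ is strictly decreasing on $(0,1)$, the square root is increasing, and $u\mapsto 1/(1+u)$ is decreasing. Hence $\Phi$ is strictly increasing. It extends continuously with $\Phi(0)=0$, $\Phi(1)=1$ and $\Phi(1/2)=1/2$, so it is a bijection of $[0,1]$. Uniqueness also follows: $t\mapsto t/(1-t)$ is strictly increasing on $(0,1)$, so the SNR equation has exactly one solution $t$ for each $\tau$.
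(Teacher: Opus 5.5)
Your argument is correct and is essentially the paper's proof. Both compute each SNR as squared signal coefficient over noise variance, treat the reference term as a deterministic offset, equate the two SNRs and solve; your monotonicity and uniqueness checks make explicit what the paper only asserts. The only difference is cosmetic: you use the forward conventions of the Preliminaries (data at $t=1$, target at $\tau=1$), while the paper's appendix uses time-reversed parameterizations. That inverts both SNRs at once, so it gives the same equation $\frac{(1-t)^2}{t^2}=\frac{1-\tau}{\tau}$ and the same $\Phi$.
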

The derivation is provided in Supplementary Material.

This mapping enables consistent score evaluation across heterogeneous processes. Given an input, the student first predicts a one-step output $\hat{X}_1 = G_{\theta}(\cdot)$. We then construct intermediate states $\mathbf{x}_{\tau}$ along the Brownian bridge Eq.~\ref{bb}. 
The corresponding teacher input is obtained via time alignment:
\begin{equation}
\mathbf{x}_{\Phi(\tau)} \sim \text{teacher process at } t=\Phi(\tau),
\end{equation}
ensuring both scores are evaluated under the same SNR.

\noindent\textbf{Audio-driven Classifier-free Guidance (CFG) Augmentation.}
While time alignment resolves the inconsistency across trajectories, few-step distillation—especially in the one-step regime—tends to suppress high-frequency dynamics, leading to degraded lip-sync and static motion. To compensate for this, we introduce a modification by \textbf{replacing the teacher score with a Classifier-free Guidance score:}
\begin{equation}\label{eq:new_grad}
\nabla_{\theta} \mathcal{L} = \mathbb{E} \left[ 
- \left( s_{\text{cfg}}^{\text{real}}(\mathbf{x}_{\Phi(\tau)}) - s_{\text{cond}}^{\text{fake}}(\mathbf{x}_{\tau}) \right) 
\frac{\partial G_{\theta}}{\partial \theta} 
\right].
\end{equation}

Here, $s_{\text{cfg}}^{\text{real}}$ is defined as an audio-driven classifier-free guidance 
$s_{\text{cfg}}^{\text{real}} 
= s_{\text{cond}}^{\text{real}} 
+ (\alpha - 1)\left( s_{\text{cond}}^{\text{real}} - s_{\text{uncond}}^{\text{real}} \right)$, 
where the unconditional branch is obtained using zero audio input, and $\alpha$ controls the guidance strength.

Substituting $s_{\text{cfg}}^{\text{real}}$ into Eq.~(\ref{eq:new_grad}), we obtain the final DMD gradient:
\begin{equation}
\label{eq:dmd_final}
\begin{aligned}
\nabla_{\theta} \mathcal{L}_{\text{DMD}}
&= \mathbb{E}\left[
-\left(\Delta_{\text{DM}}+(\alpha-1)\Delta_{\text{CFG}}\right)
\frac{\partial G_{\theta}}{\partial \theta}
\right], \\
\Delta_{\text{DM}}
&= s_{\text{cond}}^{\text{real}}(\mathbf{x}_{\Phi(\tau)})
- s_{\text{cond}}^{\text{fake}}(\mathbf{x}_{\tau}), \\
\Delta_{\text{CFG}}
&= s_{\text{cond}}^{\text{real}}(\mathbf{x}_{\Phi(\tau)})
- s_{\text{uncond}}^{\text{real}}(\mathbf{x}_{\Phi(\tau)}).
\end{aligned}
\end{equation}

Meanwhile, the fake score network is trained to match the  the student generator under the residual bridge trajectory. Specifically, given a generated sample, we construct intermediate states $\mathbf{X_{\tau}}$ constructed from Eq.~\ref{bb} and optimize the fake network $v_{\theta}(X_{\tau}, \tau)$ with the bridge loss in Eq.~\ref{eq:bridgeloss}
where $X_1$ denotes the one-step output of the generator.
This objective ensures that the fake network can serve as the student-side score function for subsequent distillation.

\noindent\textbf{Bridge Training.}
Given a reference $\mathcal{I}$, we construct the bridge state $\mathbf{x}_{\tau}$ and predict the velocity to obtain the one-step reconstruction:
$
\hat{\mathbf{x}}_0 = \mathbf{x}_{\tau} - \tau \, v_\theta(\mathbf{x}_{\tau}).
$
The student is trained to match a target endpoint $\mathbf{x}_0$ generated by the multi-step teacher. To emphasize facial and lip regions, we introduce a spatial weighting matrix:
$
\mathbf{W} = 1 +  \mathbf{M}_{\text{face}} +  \mathbf{M}_{\text{lip}},
$
where $\mathbf{M}_{\text{face}}$ and $\mathbf{M}_{\text{lip}}$ are binary masks obtained via Mediapipe~\citep{mediapipe}. 
The weighted reconstruction loss is:
$
\mathcal{L}_{\text{rec}} = \mathbb{E} \left[ 
 \mathbf{W} \odot \left\| \hat{\mathbf{x}}_0 - \mathbf{x}_0 \right\|_2^2 
\right].
$
Following prior works~\citep{chadebec2025lbm}, we additionally apply a Learned Perceptual Image Patch Similarity (LPIPS)~\citep{Zhang_2018_CVPR} loss $\mathcal{L}_{\text{perc}}$. 
The final objective is:
\begin{equation}
\mathcal{L} =  \mathcal{L}_{\text{DMD}} + \mathcal{L}_{\text{rec}} + \lambda_{\text{perc}} \mathcal{L}_{\text{perc}}.
\label{eq:final_loss}
\end{equation}

To bridge the gap between training and inference, we adopt an autoregressive self-rollout strategy in Self-Forcing~\citep{huang2025selfforcing} where the model sequentially generates video chunks conditioned on its previously generated history under the bridge formulation. 


\noindent\textbf{Model Architecture.}
Both the teacher and student model are built upon the Wan2.1-T2V-1.3B~\citep{wan2025wan} backbone, adopting a Diffusion Transformer to model spatio-temporal dynamics in latent space. To balance visual fidelity and efficiency, we employ two autoencoder variants: a 3D Conv-based WanVAE for the \textbf{Pro model}, and a lightweight 2D Conv-based TAEHV~\citep{BoerBohan2025TAEHV} for the \textbf{Lite model}, which reduces computation and memory with only minor quality degradation. For audio conditioning,  frame-aligned features are extracted from a pretrained Wav2Vec~\citep{baevski2020wav2vec} audio encoder and injecting them into the DiT via cross-attention.
\section{Experiments}
\noindent\textbf{Implementation Details.}
In the  distillation stage, the learning rates are set to $1 \times 10^{-4}$ for the generator and $2 \times 10^{-6}$ for the fake score network, with an update ratio of 1:5. To better simulate long-term error accumulation, the generator produces up to $N=2$ history chunks during distillation. $\lambda_{\text{perc}}$ is set to $4$. The model is trained on the VividHead Dataset~\citep{yu2026soulxflashheadoracleguidedgenerationinfinite}, from which we choose the first frame as the reference image input. Unless otherwise specified, all experiments are performed on a single NVIDIA H200 GPU with a batch size of 1 with 10K iterations.  
\begin{table*}[t]
\centering
\small
\setlength{\tabcolsep}{2.4pt}
\begin{tabular}{@{}l c ccccccc ccccccc@{}}
\toprule
& & \multicolumn{7}{c}{HDTF} & \multicolumn{7}{c}{CelebV-HQ} \\
\cmidrule(lr){3-9} \cmidrule(lr){10-16}
Model & NFE 
& FID$\downarrow$ & FVD$\downarrow$ & Sync-C$\uparrow$ & Sync-D$\downarrow$ & IQA$\uparrow$ & ASE$\uparrow$ & FPS$\uparrow$
& FID$\downarrow$ & FVD$\downarrow$ & Sync-C$\uparrow$ & Sync-D$\downarrow$ & IQA$\uparrow$ & ASE$\uparrow$ & FPS$\uparrow$ \\
\midrule

StableAvatar     & 50 & 176 & 329 & 8.11 & 8.05 & 6.51 & 2.69 & 0.42 & 318 & 492 &4.73 & 8.61 & 5.42 & 3.30 & 0.42 \\
Echomimic        & 30 & 722 & 981 & 5.32 & 10.02 & 6.16 & 2.29 & 0.52 & 642 & 1885 & 1.09 & 11.02 & 0.15 & 3.19 & 0.52 \\
Hallo3           & 51 & 871 & 972 & 7.14 & 9.23 & 6.24 & 2.60 & 0.30 & 842 & 1104 & 2.49 & 8.85 & 5.23 & 3.22 & 0.29 \\
FantasyTalking   & 30 & 459 & 884 & 6.33 & 9.41 & 6.13 & 2.45 & 0.15 & 544 & 1429 & 1.84 & 8.61 & 5.21 & 3.18& 0.15 \\
OmniAvatar       & 50 & 168 & 623 & 3.10 & 12.36 & 6.32 & 2.68 & 0.18 & 351 & 923 & 1.25 & 10.03 & 5.39 & 3.27 & 0.16 \\
Soulx-Flashhead  & 4  & \underline{30}  & 452 & 8.07 & 8.24 & \textbf{6.60} & \textbf{2.95} & 14.42 & 71  & 642 & 4.77 & 8.23 & \underline{5.57} & \textbf{3.35} & 14.42 \\
\rowcolor{ourslitebg}
\textbf{OURS (Lite)}      & {\textbf{1}}  & 38  & \underline{285} & \underline{8.14} & \underline{7.89} & 6.22 & 2.70 & \textbf{200} & \underline{47}  & \underline{456} & \underline{4.80} & \underline{8.22} & 5.51 & 3.33 & \textbf{200} \\
\rowcolor{oursprobg}
\textbf{OURS (Pro) }      & \textbf{1}  & \textbf{21} & \textbf{197} & \textbf{8.38} & \textbf{7.69} & \underline{6.53} & \underline{2.74} & \underline{55} 
& \textbf{42} & \textbf{370} & \textbf{5.05} & \textbf{8.21} & \textbf{5.58} & \underline{3.34} & \underline{55} \\

\bottomrule
\end{tabular}
\caption{Comparison on HDTF and CelebV-HQ datasets.}
\label{tab:comparison}
\end{table*}

\noindent\textbf{Evaluation Metrics.}
We compare LeapTalk with recent baselines, including SoulX-FlashHead~\citep{yu2026soulxflashheadoracleguidedgenerationinfinite}, StableAvatar~\citep{tu2025stableavatar}, EchoMimic~\citep{meng2025echomimicv3}, Hallo3~\citep{cui2025hallo3}, FantasyTalking~\citep{wang2025fantasytalking}, and OmniAvatar~\citep{gan2025omniavatar}, on 80 videos sampled from HDTF~\citep{zhang2021flow} and CelebV-HQ~\citep{zhu2022celebvhq}.
For a fair comparison, all baselines are run under the same hardware and evaluation configuration, using identical input videos, resolution settings, and inference batch size. 
We use FID~\citep{heusel2017gans} and FVD~\citep{unterthiner2019fvd} for frame quality, Sync-C/Sync-D~\citep{chung2016out} for audio-visual synchronization, and FPS for inference speed. Long-term identity consistency is measured by tracking DINOv2~\citep{oquab2023dinov2} similarity between each generated frame and the reference image.
 \begin{figure}[h!]
    \centering
    \includegraphics[width=1\linewidth]{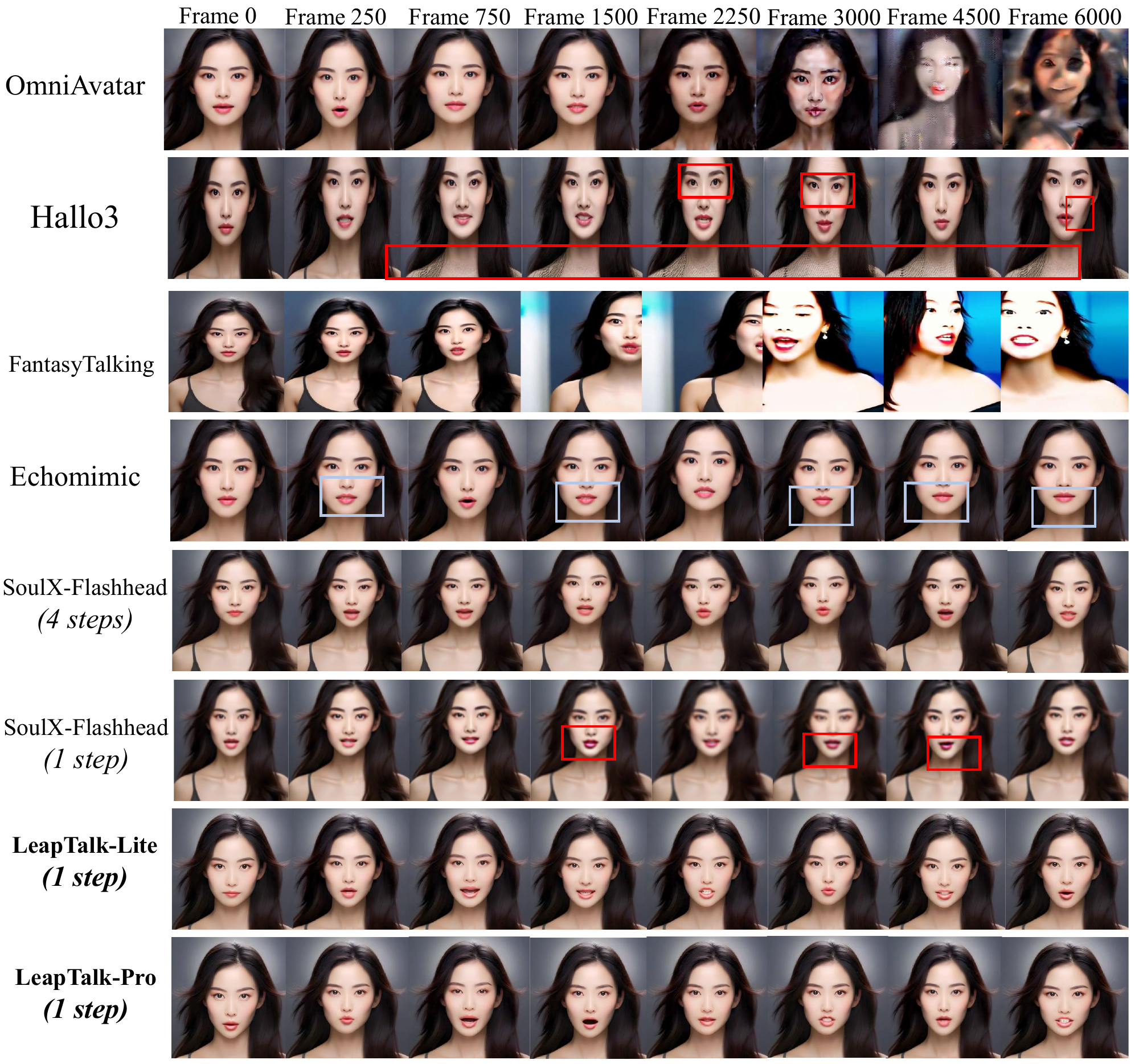}
    \caption{\textbf{Qualitative comparison on long-video streaming generation.} Frames are sampled from streaming rollouts at increasing time indices. The \textcolor{red}{red boxes} mark identity inconsistency and drift, and the \textcolor{blue}{blue boxes} mark slight or incorrect lip-sync. LeapTalk maintains stable identity and lip motion as generation proceeds.}
    \label{fig:quali}
\end{figure}

\noindent\textbf{Quantitative Results.}
Table~\ref{tab:comparison} shows that LeapTalk achieves strong quality while using only \textbf{1 NFE}. LeapTalk Pro obtains the best FID/FVD (21/197 on HDTF and 42/370 on CelebV-HQ) and the strongest lip-sync scores, while LeapTalk Lite reaches up to \textbf{200 FPS}. Figure~\ref{fig:dino} further evaluates identity consistency during streaming generation by tracking DINO similarity to the reference image. As time progresses, our model always keeps a high and stable similarity curve, whereas baselines such as OmniAvatar degrade noticeably. We also provide motion diversity variance analysis in the supplementary material. This sustained stability supports long-duration generation and suggests that our bridge method can be extended to open-ended and infinite generation with both high speed and stability.
\begin{figure}
    \centering
    \includegraphics[width=1\linewidth]{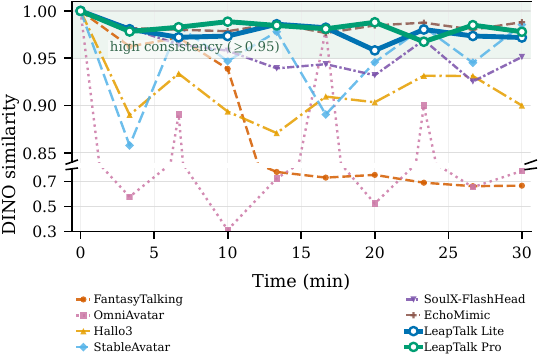}
    \caption{\textbf{DINO similarity over video time.} We plot the similarity between each generated frame and the reference image as streaming generation progresses. A flatter and higher curve indicates less identity drift over long-duration generation.}
    \label{fig:dino}
\end{figure}

\noindent\textbf{Qualitative Results.}
Figure~\ref{fig:quali} reports qualitative comparisons under long-video streaming generation. As the frame index grows, baseline methods accumulate visible artifacts, drift from the reference identity, and exhibit unstable lip synchronization. In contrast, LeapTalk preserves identity, facial structure, and lip motion throughout the streaming process, showing stronger long-duration generation stability.
Furthermore, compared with 4-step SoulX-FlashHead, our method maintains comparable quality with only \textbf{1 step} per chunk, while SoulX-FlashHead degrades significantly in the 1-step setting with noticeable identity drift.  This highlights the advantage of our bridge-based formulation for stable and efficient streaming open-ended generation.

\subsection{Ablation Study}
We ablate the Pro model to quantify the contribution of each component. For the Brownian Bridge ablation, we keep the same model architecture and remove only the bridge formulation. As shown in Table~\ref{tab:ablation} and Figure~\ref{fig:visabi}, this causes severe degradation, increasing FID from $21$ to $217$ and reducing Sync-C from $8.38$ to $7.16$; visually, the generated identity drifts over time and the quality collapses into artifacts. This indicates that the identity-consistency improvement mainly comes from the  Brownian Bridge. Removing the SNR-aligned time transformation also harms fidelity (FID $378$) and produces blur, indicating that teacher-student noise alignment is necessary for heterogeneous DMD. Without audio-driven CFG, lip synchronization and motion quality drop substantially (Sync-C $4.34$, Sync-D $10.21$; Figure~\ref{fig:audiocfg}), showing its role in preserving fine-grained mouth dynamics under one-step inference.
\begin{table}
\centering
\small
\begin{tabular}{lccc}
\toprule
\textbf{Method} & \textbf{FID} & \textbf{Sync-C} & \textbf{Sync-D} \\
\midrule
LeapTalk                    & 21  & 8.38 & 7.69 \\
w/o Brownian Bridge      & 217 & 7.16 & 11.05 \\
w/o Time Transformation  & 378 & 7.84 & 8.13 \\
w/o Audio-Driven CFG     & 162 & 4.34 & 10.21 \\
\bottomrule
\end{tabular}
\caption{Ablation study results.}
\label{tab:ablation}
\end{table}

\begin{figure}
    \centering
    \includegraphics[width=1\linewidth]{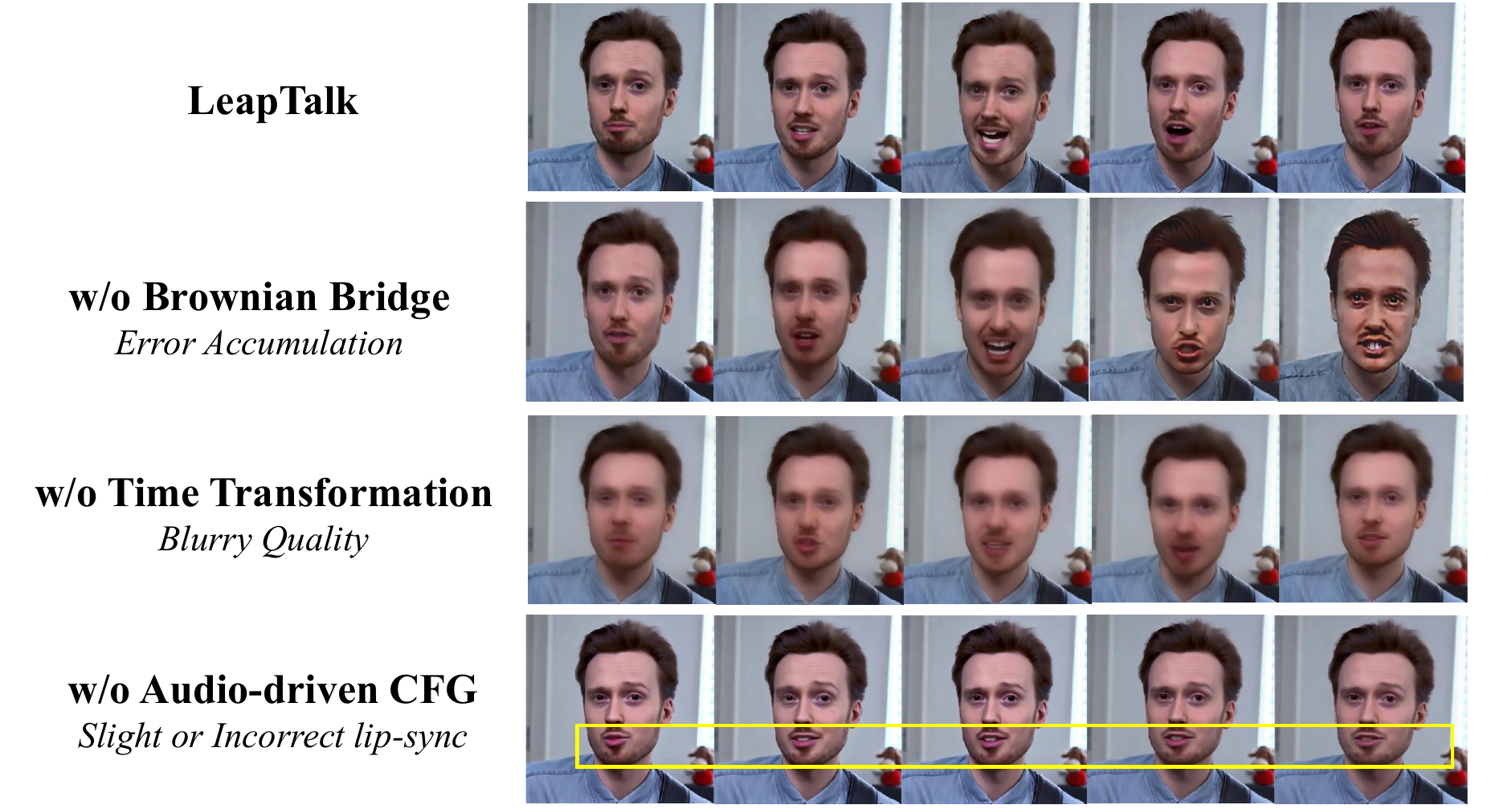}
    \caption{\textbf{Ablation results demonstrating the contribution of each component in our framework.}}
    \label{fig:visabi}
\end{figure}

\begin{figure}[h!]
    \centering
    \includegraphics[width=1\linewidth]{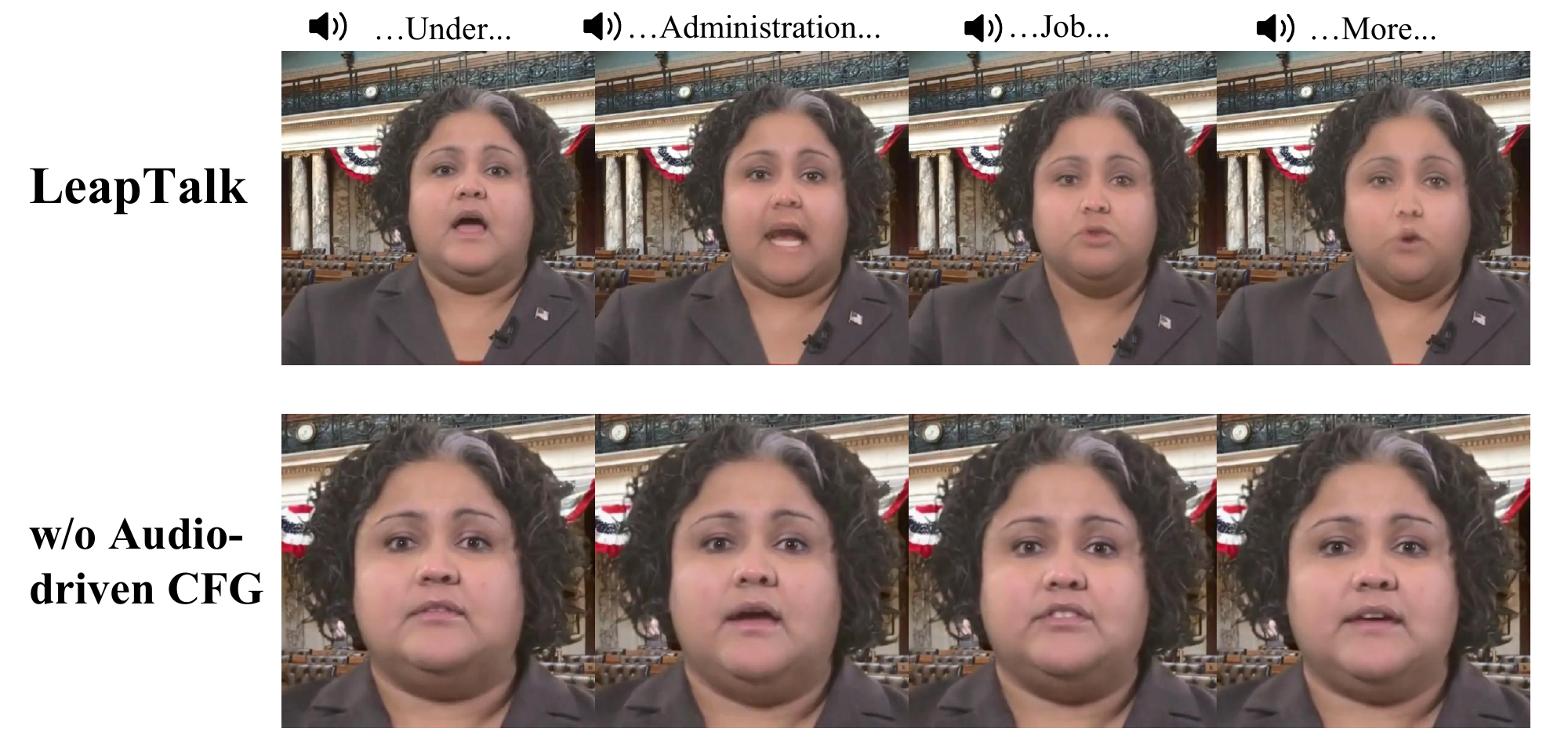}
    \caption{Visualization of ablation effects of audio-driven CFG.}
    \label{fig:audiocfg}
\end{figure}

\subsection{Parameter Sensitivity Analysis}
We analyze two key hyperparameters that control motion strength and reconstruction quality: the audio CFG scale $\alpha$ and the perceptual loss weight $\lambda_{\text{perc}}$ in Eq.~\ref{eq:final_loss}. As shown in Figure~\ref{fig:cfgsen}, increasing $\alpha$ improves lip-sync and motion expressiveness at first, but overly large guidance produces unnatural facial motion; a moderate value around $1.6$ achieves the best balance. Table~\ref{tab:lpipssensitivity} shows a similar non-monotonic trend for $\lambda_{\text{perc}}$: stronger perceptual supervision improves PSNR, SSIM, and LPIPS up to $\lambda_{\text{perc}}{=}4.0$, while $\lambda_{\text{perc}}{=}8.0$ degrades all metrics, indicating that excessive perceptual constraints destabilize visual quality. We therefore use $\alpha{=}1.6$ and $\lambda_{\text{perc}}{=}4.0$ as default settings. We also provide  analyses on chunk size and resolution in the supplementary material.
\begin{figure}[h!]
    \centering
    \includegraphics[width=1\linewidth]{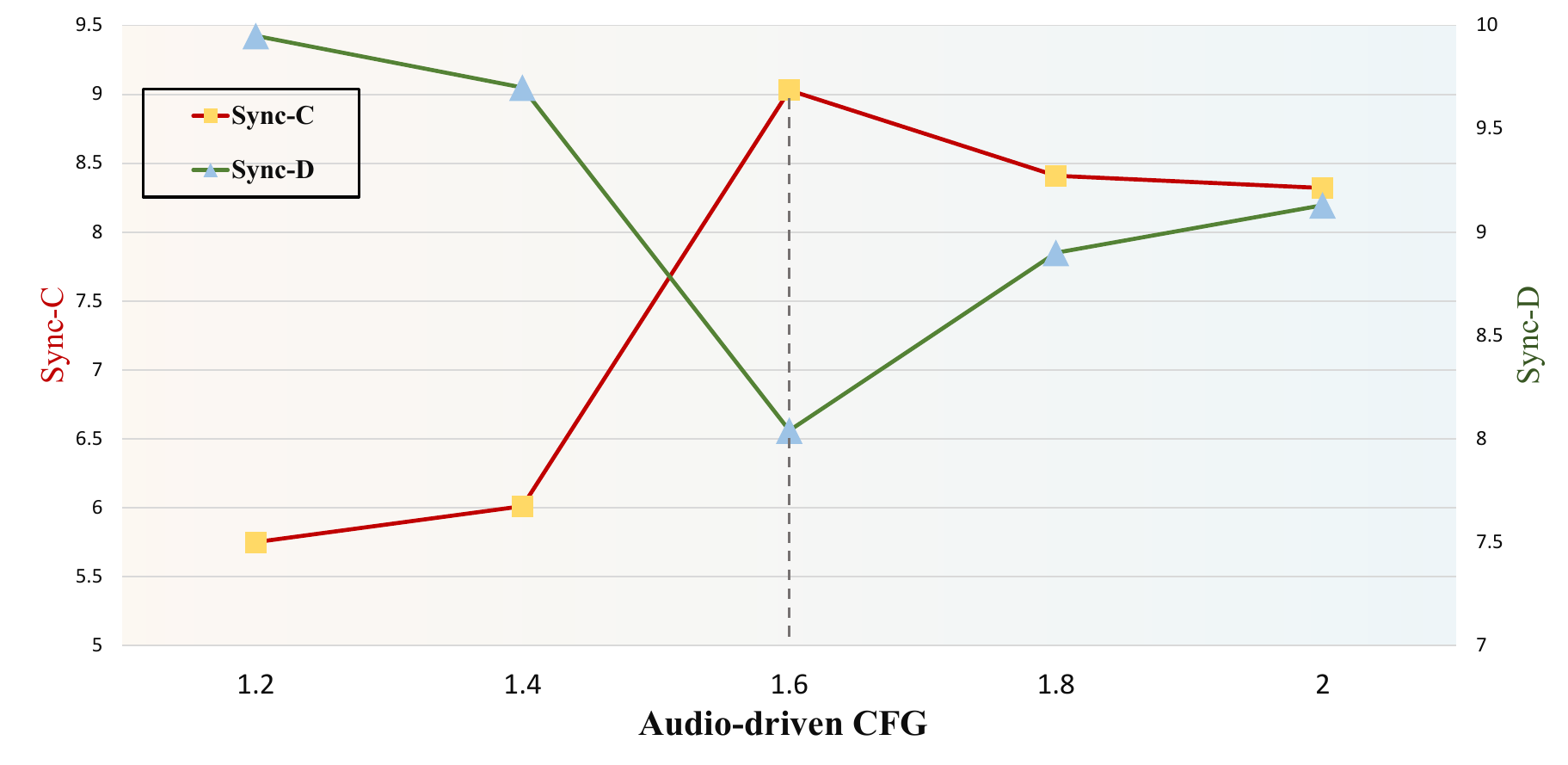}
    \caption{Sensitivity analysis of audio-driven CFG.}
    \label{fig:cfgsen}
\end{figure}
\begin{table}
\centering
\small
\setlength{\tabcolsep}{1.5pt}
\begin{tabular*}{\columnwidth}{@{\extracolsep{\fill}}lccccc@{}}
\toprule
Metric & $\lambda_{\text{perc}}{=}0$ & $\lambda_{\text{perc}}{=}1.0$ & $\lambda_{\text{perc}}{=}2.0$ & $\lambda_{\text{perc}}{=}4.0$ & $\lambda_{\text{perc}}{=}8.0$ \\
\midrule
PSNR$\uparrow$ &  18.62 & 19.18 & \underline{19.30} & \textbf{19.70} & 17.31 \\
SSIM$\uparrow$ &  0.625 & 0.683 & \underline{0.692} & \textbf{0.704} & 0.349 \\
LPIPS$\downarrow$ &  0.203 & 0.197 & \underline{0.194} & \textbf{0.183} & 0.562 \\
\bottomrule
\end{tabular*}
\caption{Perceptual loss weight sensitivity. \textbf{Bold}: best.}
\label{tab:lpipssensitivity}
\end{table}
\section{User Study and Further Discussions}
\label{sec:discussion}
\noindent\textbf{User Study.}
We conduct a user study with 30 participants, primarily consisting of university students and academic staff. In each question, participants are presented with videos generated by another method alongside our LeapTalk results, and are asked to select the best one based on four criteria: identity consistency, visual quality, lip-sync accuracy, and overall preference. As summarized in Table~\ref{tab:user}, LeapTalk is consistently preferred across all criteria, indicating stronger perceptual quality in addition to its quantitative performance.
\begin{table}[t]
\centering
\small
\setlength{\tabcolsep}{3pt}
\begin{tabularx}{\columnwidth}{@{}l*{4}{>{\centering\arraybackslash}X}@{}}
\toprule
\textbf{Method} & \textbf{\makecell{Identity\\Cons.}} & \textbf{\makecell{Lip-sync\\Acc.}} & \textbf{\makecell{Visual\\Quality}} & \textbf{\makecell{Overall\\Pref.}} \\
\midrule
StableAvatar     & 92.40\% & 93.14\% & 91.25\% & 91.37\% \\
Echomimic        & 91.26\% & 96.14\% & 94.36\% & 95.11\% \\
SoulX-FlashHead  & 95.23\% & 92.16\% & 91.89\% & 95.32\% \\
Hallo3           & 94.81\% & 95.48\% & 96.29\% & 96.21\% \\
FantasyTalking   & 96.89\% & 93.32\% & 95.36\% & 97.90\% \\
OmniAvatar       & 95.95\% & 94.31\% & 97.43\% & 95.75\% \\
\bottomrule
\end{tabularx}
\caption{User study results across different evaluation criteria. The table reports the percentage of participants who prefer our method to another method. A higher percentage indicates stronger user preference and better perceived performance.}
\label{tab:user}
\end{table}
\begin{table}[t]
\centering
\small
\setlength{\tabcolsep}{3pt}
\begin{tabularx}{\columnwidth}{@{}l>{\centering\arraybackslash}Xcccc@{}}
\toprule
\textbf{Autoenc.} & \textbf{Arch.} & 
\textbf{\makecell{Enc.\\Speed}} & 
\textbf{\makecell{Dec.\\Speed}} & 
\textbf{\makecell{Enc.\\Mem.}} & 
\textbf{\makecell{Dec.\\Mem.}} \\
\midrule
WanVAE & Causal Conv3D & 4.17s & 5.26s & 8.495GB & 10.128GB \\
TAEHV    & Conv2D        & \textbf{0.39s} & \textbf{0.24s} & \textbf{0.008GB} & \textbf{0.411GB} \\
\bottomrule
\end{tabularx}
\caption{\textbf{Comparison of autoencoder backbones used in Lite and Pro variants.} Encoding and decoding speeds are measured on video clips of 81 frames under BF16 precision. 
}
\label{tab:vae}
\end{table}
\begin{figure}
    \centering
    \includegraphics[width=1\linewidth]{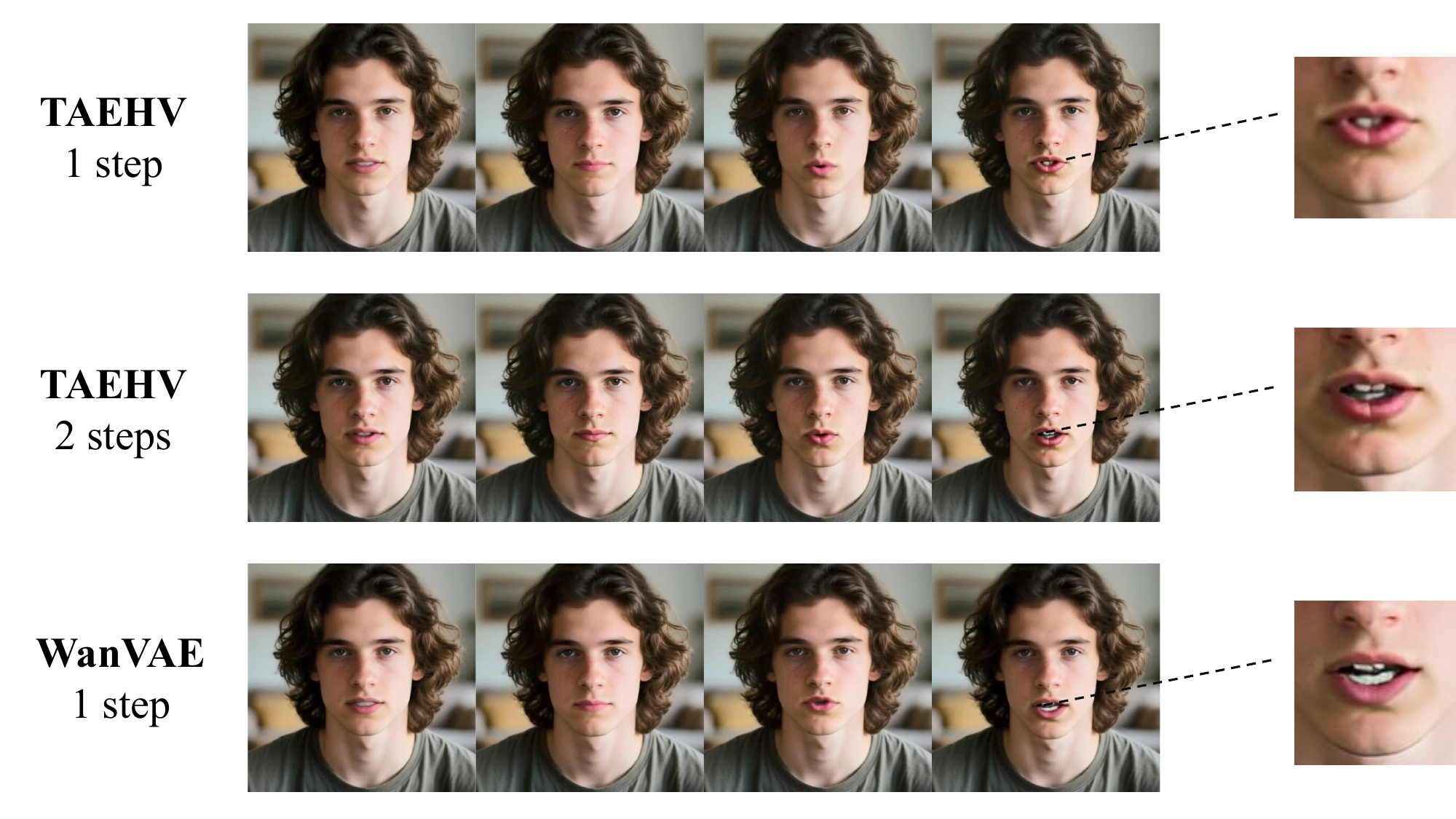}
    \caption{Visual comparison across different VAEs.}
    \label{fig:vae}
\end{figure}
\noindent\textbf{Effect of different Autoencoders.}
As shown in Figure~\ref{fig:vae} and Table~\ref{tab:vae}, TAEHV greatly reduces computation and memory compared with the Conv3D-based WanVAE, while preserving comparable structure, identity, and motion quality under one-step inference. Its main degradation is slight blurriness in fine regions such as lips, which can be alleviated by increasing inference from 1 to 2 steps, indicating a favorable efficiency--quality trade-off for real-time deployment.

\noindent\textbf{Generalization to Diverse Real-World Scenarios.}
We further evaluate our method under diverse and challenging inputs, including side-view portraits, cartoons, sculptures, and oil paintings. As shown in Figure~\ref{fig:np}, LeapTalk still synthesizes temporally coherent talking-head videos while preserving the reference identity and visual style. This suggests that the bridge-based formulation captures robust identity-motion correspondence beyond canonical frontal face images.
\begin{figure}[h!]
    \centering
    \includegraphics[width=1.0\linewidth]{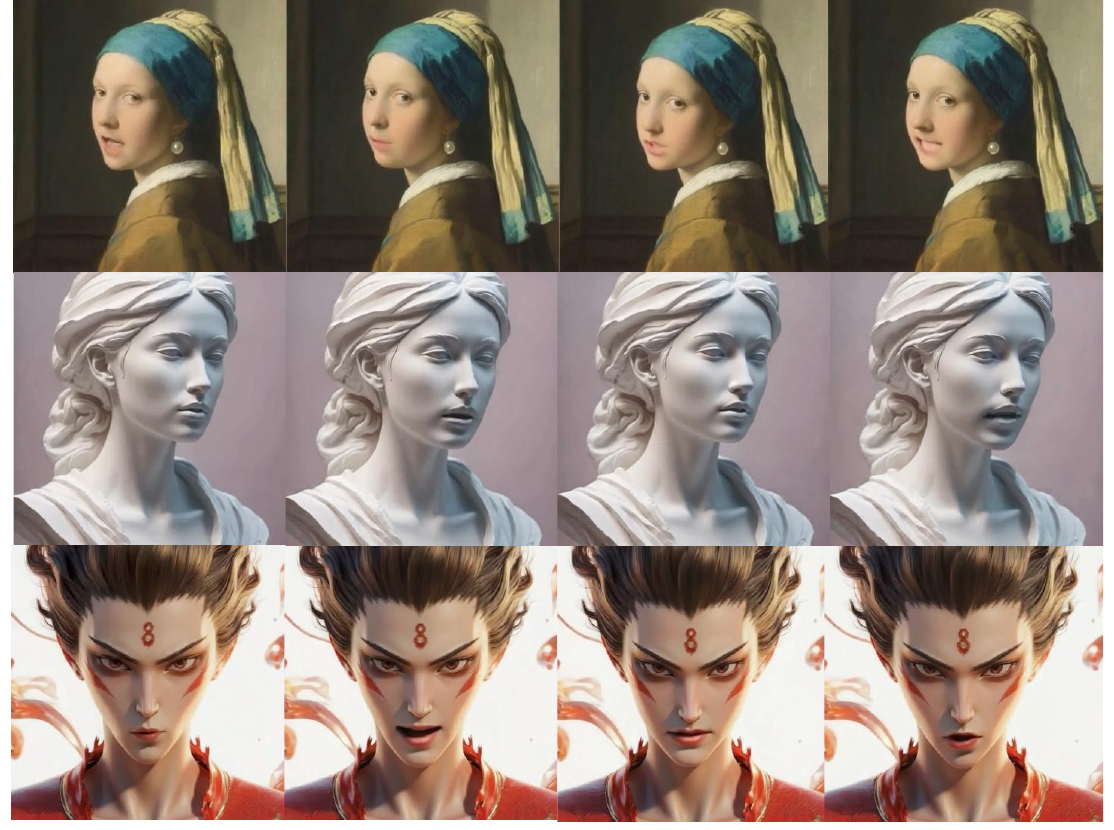}
    \caption{Generated results under diverse and challenging input scenarios.}
    \label{fig:np}
\end{figure}
\section{Conclusion}
We present LeapTalk, a novel framework for real-time, streaming, open-ended talking-head generation. By formulating generation as a Brownian bridge process, it reduces error accumulation and stabilizes long-term synthesis. A heterogeneous DMD scheme with SNR-aligned time transformation and audio-driven CFG augmentation enables effective distillation across mismatched paradigms and high-quality one-step generation. LeapTalk achieves competitive fidelity and lip synchronization at up to 200 FPS, breaking the latency–quality trade-off.

\bibliography{aaai2027}

\appendix

\section*{Appendix}

\section{Related Works}
\subsection{Long Video Generation}
Most video generation models~\citep{sora,zheng2024open,wan2025wan,gao2025wan} are limited to short clips (5--10 seconds), and extending them to longer durations without drift remains challenging. Early methods~\citep{qiu2023freenoise,kim2024fifo} rely on training-free noise rescheduling, while later works~\citep{chen2024diffusionforcingnexttokenprediction,ruhe2024rollingdiffusionmodels} simulate inference-time corruption during training. Other approaches explore architectural and training strategies such as next-frame prediction, causal attention, and rollout-based learning~\citep{zhang2025framepackv1,zhang2025framepack,huang2025selfforcing,zhu2026causalforcingautoregressivediffusion,li2025stablevideoinfinityinfinitelength,guo2025endtoendtrainingautoregressivevideo}. However, these methods still suffer from drift over long sequences. Moreover, they are primarily designed for text-to-video tasks and do not transfer well to audio-driven talking-head generation due to a significant domain gap.
\subsection{Audio-driven Talking Head Generation}
Audio-driven talking head generation has achieved remarkable visual fidelity with the advent of Diffusion Transformers (DiTs). However, DiT-based methods~\citep{chen2025echomimic,cui2025hallo3,gan2025omniavatar,wang2025fantasytalking,shen2023difftalk} can only generated a short clip and suffer from high latency due to the computational burden of multi-step denoising. While methods such as StableAvatar~\citep{tu2025stableavatar} enable infinite-length synthesis, they still operate entirely offline and cannot support interactive streaming. To bridge the latency gap, recent autoregressive approaches ~\citep{yu2026soulxflashheadoracleguidedgenerationinfinite,huang2025liveavatarstreamingrealtime,xiao2025knotforcingtamingautoregressive,wang2026restdiffusionbasedrealtimeendtoend} attempt high-speed streaming but inevitably suffer from error accumulation, identity drift, and detail degradation due to the flow-matching paradigm that starts from noise. 
\section{An Introduction to Brownian Bridge}
\label{sec:appendix_bb}

We briefly review the construction of the Brownian bridge~\citep{NEURIPS2021_940392f5,tan2025vision} starting from standard
Brownian motion and derive its closed-form expression.

A standard Brownian motion $\{B(t), t \ge 0\}$ is a stochastic process satisfying
$B(0)=0$, with independent increments, and such that
\begin{equation}
B(t)-B(s) \sim \mathcal{N}(0, t-s), \quad 0 \le s < t.
\end{equation}
As a consequence, $B(t) \sim \mathcal{N}(0,t)$ and
\begin{equation}
\mathrm{Cov}(B(s), B(t)) = \min(s,t).
\end{equation}
For a given initial point $x$, we define a Brownian motion starting from $x$ as
$B^x(t) = x + B(t)$.

Let $\{B(t), t \ge 0\}$ be a standard Brownian motion, i.e., $B(0)=0$,
it has independent increments, and $B(t)-B(s)\sim \mathcal{N}(0, t-s)$
for $0 \le s < t$. For a given initial point $x$, we define a Brownian motion
starting from $x$ as $B^x(t) = x + B(t)$.

We consider the process starting from $X_{\mathrm{src}}$:
\begin{equation}
X_\tau = X_{\mathrm{src}} + B(\tau), \quad \tau \in [0,1],
\end{equation}
which satisfies $X_0 = X_{\mathrm{src}}$ and
\begin{equation}
X_\tau \sim \mathcal{N}(X_{\mathrm{src}}, \tau I).
\end{equation}

However, this process does not constrain the terminal value $X_1$. To enforce
$X_1 = X_{\mathrm{tgt}}$, we consider the conditioned process
\begin{equation}
X_\tau \mid_{ \tau=1} = X_{\mathrm{tgt}},
\end{equation}
which defines the Brownian bridge.

Since Brownian motion is a Gaussian process, the joint distribution of
$(X_\tau, X_1)$ is Gaussian. In particular,
\begin{align}
X_\tau &= X_{\mathrm{src}} + B(\tau), \\
X_1 &= X_{\mathrm{src}} + B(1),
\end{align}
and their covariance satisfies
\begin{equation}
\mathrm{Cov}(B(\tau), B(1)) = \tau.
\end{equation}

By standard Gaussian conditioning, we have
\begin{equation}
\mathbb{E}[X_\tau \mid X_1]
= X_{\mathrm{src}} + \tau (X_1 - X_{\mathrm{src}}),
\end{equation}
which yields
\begin{equation}
\mathbb{E}[X_\tau \mid X_1 = X_{\mathrm{tgt}}]
= (1-\tau) X_{\mathrm{src}} + \tau X_{\mathrm{tgt}}.
\end{equation}

The conditional variance is given by
\begin{equation}
\mathrm{Var}(X_\tau \mid X_1)
= \tau(1-\tau) I.
\end{equation}

Therefore, the conditioned process admits the representation
\begin{equation}
X_\tau = (1-\tau) X_{\mathrm{src}} + \tau X_{\mathrm{tgt}} + \sqrt{\tau(1-\tau)}\,\epsilon,
\quad \epsilon \sim \mathcal{N}(0,I),
\end{equation}
which is the Brownian bridge connecting $X_{\mathrm{src}}$ and
$X_{\mathrm{tgt}}$.

In our formulation, we adopt an equivalent time-reversed parameterization
\begin{equation}
X_\tau = \tau X_{\mathrm{src}} + (1-\tau) X_{\mathrm{tgt}} + \sqrt{\tau(1-\tau)}\,\epsilon,
\end{equation}
so that $\tau=1$ corresponds to the source state and $\tau=0$ corresponds to
the target state.
\section{Derivation of the SNR-aligned Time Transformation}
\label{sec:appendix_snr}

In this section, we derive the time transformation $t = \Phi(\tau)$ that aligns
the signal-to-noise ratios (SNRs) between the teacher and student processes.
This alignment ensures that both models operate under equivalent noise levels,
which is essential for consistent score matching in heterogeneous distillation.


\noindent\textbf{Proof.} 
For the student model, we consider the Brownian bridge formulation
\begin{equation}
X_\tau = \tau X_{\mathrm{src}} + (1-\tau) X_{\mathrm{tgt}} + \sqrt{\tau(1-\tau)}\,\epsilon,
\end{equation}
with $\epsilon \sim \mathcal{N}(0,I)$ and $\tau \in (0,1)$. Since the objective
is to recover $X_{\mathrm{tgt}}$, we take $(1-\tau)X_{\mathrm{tgt}}$ as the signal
component and treat $\tau X_{\mathrm{src}}$ as a deterministic bias term. The
resulting SNR is therefore
\begin{equation}
\mathrm{SNR}_{\mathrm{Student}}(\tau) = \frac{(1-\tau)^2}{\tau(1-\tau)} = \frac{1-\tau}{\tau}.
\end{equation}

For the teacher, we adopt the diffusion parameterization
\begin{equation}
x_t = (1-t)x_0 + t\epsilon, \quad t \in (0,1),
\end{equation}
which yields
\begin{equation}
\mathrm{SNR}_{\mathrm{Teacher}}(t) = \frac{(1-t)^2}{t^2}.
\end{equation}

We align the two processes by enforcing $\mathrm{SNR}_{\mathrm{Teacher}}(t)
= \mathrm{SNR}_{\mathrm{Student}}(\tau)$, which gives
\begin{equation}
\frac{(1-t)^2}{t^2} = \frac{1-\tau}{\tau}.
\end{equation}
Solving for $t \in (0,1)$ yields a unique monotonic mapping
\begin{equation}
t = \Phi(\tau) = \frac{1}{1 + \sqrt{\frac{1-\tau}{\tau}}}.
\end{equation}

This mapping preserves the endpoints $\tau \to 0 \Rightarrow t \to 0$ and
$\tau \to 1 \Rightarrow t \to 1$, and satisfies $\Phi(0.5)=0.5$, aligning the
maximal uncertainty point across the two processes. By construction, $\Phi(\tau)$
ensures that teacher and student operate at equivalent noise levels, which
provides a consistent basis for score matching despite their heterogeneous
generative formulations.

\section{Comparison on Motion Diversity and Audio CFG Ablation}
To measure the pose variance and naturalness, we add Hopenet-based pose diversity~\cite{Doosti_2020_CVPR} (Yaw/Pitch/Roll Std and Average Std) and Beat Align Score (BAS) to verify this. As shown in Tables~\ref{tab:motion_baseline}--\ref{tab:motion_cfg}, \textbf{LeapTalk achieves superior or competitive diversity and alignment vs.\ baselines.}\\
Table~\ref{tab:motion_cfg} directly validates the audio CFG claim: pose diversity (Avg Std) increases with CFG scale, \textbf{proving CFG effectively enhances motion and avoids static behavior.} However, BAS gradually drops as CFG grows, indicating over-strong guidance can hurt audio-motion alignment. 
\begin{table}[h]
\centering
\resizebox{1\linewidth}{!}{
\begin{tabular}{lccccr}
\toprule
Method & Yaw Std$\uparrow$ & Pitch Std$\uparrow$ & Roll Std$\uparrow$ & Avg Std$\uparrow$ & BAS$\uparrow$ \\
\midrule
EchoMimic       & 1.734 & 1.867 & 0.771 & 1.457 & 0.650 \\
OmniAvatar      & \underline{4.332} & \textbf{5.888} & 1.909 & \underline{4.043} & 0.652 \\
SoulX-Flashhead & 3.594 & 2.918 & \underline{1.959} & 2.824 & \underline{0.684} \\
\textbf{Ours}   & \textbf{6.177} & \underline{5.632} & \textbf{2.146} & \textbf{4.652} & \textbf{0.696} \\
\bottomrule
\end{tabular}}
\caption{Motion diversity vs.\ baselines on the  HDTF samples. \textbf{Bold}: best; \underline{uline}: 2nd best.}
\label{tab:motion_baseline}
\end{table}
\begin{table}[h]
\centering
\resizebox{1\linewidth}{!}{
\begin{tabular}{cccccc}
\toprule
CFG Scale & Yaw Std$\uparrow$ & Pitch Std$\uparrow$ & Roll Std$\uparrow$ & Avg Std$\uparrow$ & BAS$\uparrow$ \\
\midrule
1.0 & 1.553 & 2.653 & 0.757 & 1.655 & 0.723 \\
3.0 & 4.021 & 4.493 & 1.668 & 3.394 & 0.658 \\
5.0 & 6.840 & 5.764 & 2.396 & 5.000 & 0.696 \\
7.0 & 8.533 & 7.635 & 2.801 & 6.323 & 0.650 \\
\bottomrule
\end{tabular}}
\caption{Audio CFG scale ablation on HDTF.}
\label{tab:motion_cfg}
\end{table}
\section{Streaming Chunk Size and Playback Smoothness:}
\begin{table}[h]
\centering
\resizebox{1\linewidth}{!}{
\begin{tabular}{cccccc}
\toprule
Chunk Size & $T_{\mathrm{chunk}}$ (s) & $T_{\mathrm{gen}}$ (s) & $T_{\mathrm{gen}}/T_{\mathrm{chunk}}$ $\downarrow$ & FPS $\uparrow$ \\
\midrule
9  & 0.36 & 0.088 & 0.24 & 45.4 \\
17 & 0.68 & 0.146 & 0.22 & 82.0 \\
\textbf{33} & \textbf{1.32} & \textbf{0.268} & \textbf{0.20} & \textbf{104.7} \\
49 & 1.96 & 0.432 & 0.22 & 101.9 \\
65 & 2.60 & 0.629 & 0.24 & 95.3 \\
\bottomrule
\end{tabular}}
\caption{Chunk size ablation at $512\times512$, 1-step inference, single A100 GPU. $T_{\mathrm{gen}}/T_{\mathrm{chunk}}{<}1$ indicates real-time. \textbf{Bold} represents chosen default, which achieves the best ratio.}
\label{tab:chunk_size}
\end{table}
Our chunk size is \textbf{33 frames}. For streaming playback, chunk $i{+}1$ is generated while chunk $i$ is being played; therefore, smooth playback only requires $T_{\mathrm{gen}}/T_{\mathrm{chunk}} < 1$. As shown in Table~\ref{tab:chunk_size}, \textbf{33 frames achieves the best $T_{\mathrm{gen}}/T_{\mathrm{chunk}}$ ratio (0.20), meaning the next chunk is generated within only 20\% of the current chunk's playback time, leaving sufficient margin for continuous gap-free streaming.}\\

\section{Resolution and Inference Speed:}
\textbf{All main experiments were conducted at $512 \times 512$ resolution.} Moreover, Table \ref{tab:resolution_latency} provides additional results reporting FPS and latency under different resolutions on a single A100 GPU. 
Note that the 200 fps in the main manuscript is measured on H200. We will align the hardware in the revision. 
\begin{table}[h]
\centering
\resizebox{1\linewidth}{!}{
\begin{tabular}{c|ccccc}
\toprule
Resolution & $256\times256$ & $384\times384$ & $\mathbf{512\times512}$ & $768\times768$ & $1024\times1024$ \\
\midrule
FPS $\uparrow$           & 476 & 191 & {104} & 35 & 14 \\
Latency (s) $\downarrow$ & 0.059 & 0.146 & {0.270} & 0.793 & 1.908 \\
\bottomrule
\end{tabular}}
\caption{Inference speed and average chunk generation latency under different resolutions on a single A100 GPU (1-step inference).}
\label{tab:resolution_latency}
\end{table}
\section{Additional Results in Challenging Scenarios}
\label{sec:appendix_more_results}
We provide additional qualitative results to evaluate our framework under challenging and out-of-distribution conditions, including \textbf{non-human faces (e.g., animals)}, \textbf{artistic portraits (e.g., paintings and stylized illustrations)}, \textbf{side-view inputs}, \textbf{low-light scenarios}, \textbf{partial occlusions}, and \textbf{non-photorealistic objects such as sculptures}. Despite significant variations in geometry, texture, and illumination, our Bridge Forcing framework consistently produces stable and coherent animations, preserving identity cues anchored to $\mathcal{I}$ while maintaining smooth motion dynamics. Notably, even under severe appearance gaps, the model avoids long-horizon drift, demonstrating the effectiveness of the endpoint-constrained data-to-data formulation and autoregressive chunk design. Additional results are shown in Fig.~\ref{fig:more_results}.

\begin{figure}[ht!]
    \centering
    \includegraphics[width=1\linewidth]{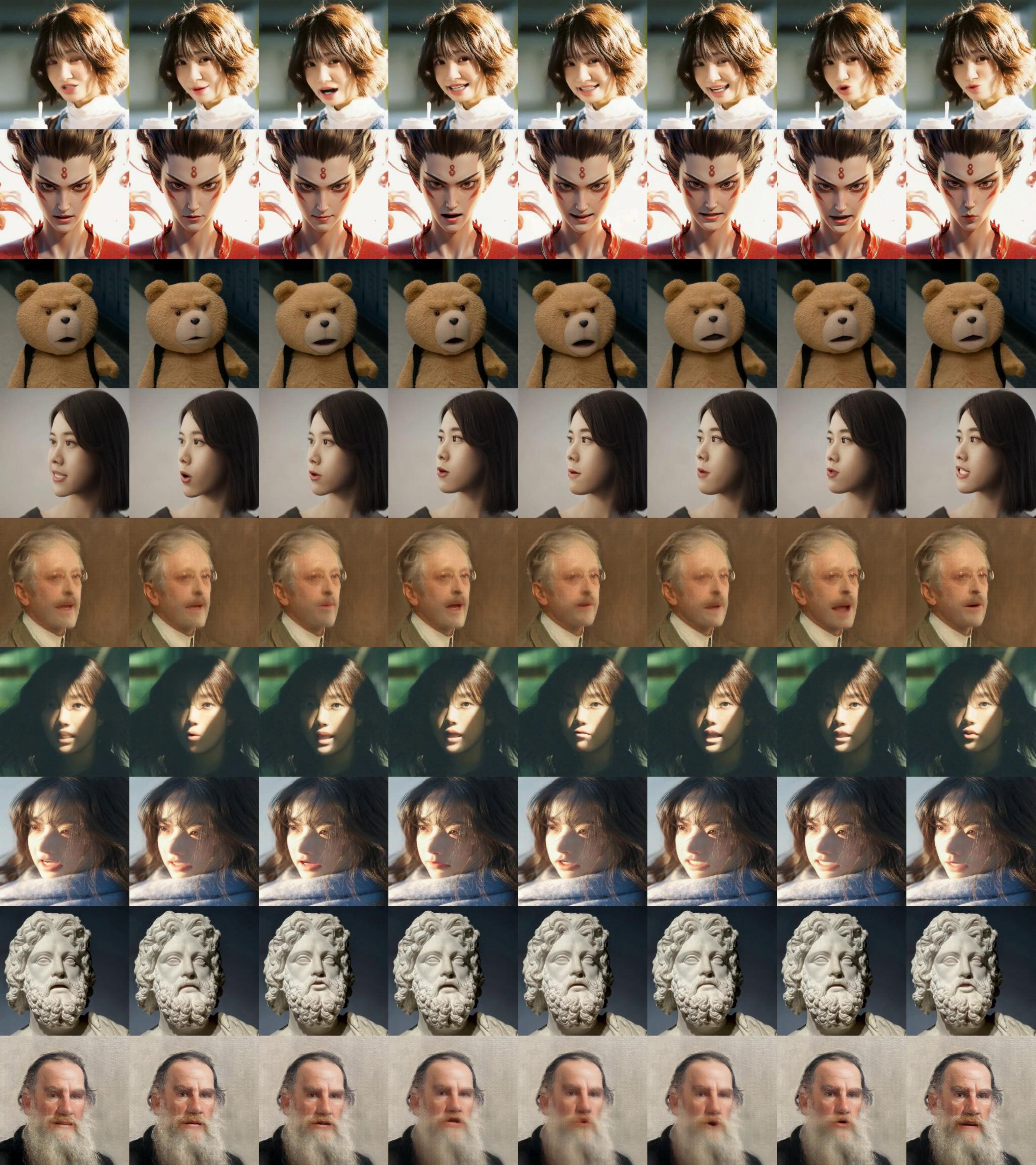}
    \caption{More qualitative results on challenging conditions including \textit{non-human faces (e.g., animals)}, \textit{artistic portraits (e.g., paintings and stylized illustrations)},\textit{ side-view photos},\textit{ low-light conditions}, \textit{partial occlusions,} and even \textit{non-photorealistic objects such as sculptures}.}
    \label{fig:more_results}
\end{figure}
\section{Training Algorithm of heterogeneous DMD}
\label{sec:appendix_training}
Algorithm~\ref{alg:snaptalk_training} shows the training algorithm of heterogeneous DMD. The training procedure consists of three main steps: (1) generating a student chunk and a teacher target, (2) computing the generator losses, and (3) updating the generator and fake score estimation model. The generator is trained to minimize the distribution matching loss, bridge loss, and LPIPS loss, while the fake score estimation model is updated to improve its ability to estimate the score of the generated samples.
\begin{algorithm}
\caption{Training Procedure with Heterogeneous DMD}
\label{alg:snaptalk_training}
\begin{algorithmic}[1]
\REQUIRE Pretrained flow-matching teacher $V_{\text{real}}$, paired dataset $\mathcal{D} = \{(I, \mathbf{c})\}$
\REQUIRE Audio CFG scale $s$, spatial loss weights $\lambda_{\text{face}}, \lambda_{\text{lip}}$
\ENSURE Trained single-step generator $G_\theta$

\STATE // \textit{Initialize generator and fake score estimators from pretrained model}
\STATE $G_\theta \leftarrow \text{copyWeights}(V_{\text{real}})$, $V_{\text{fake}} \leftarrow \text{copyWeights}(V_{\text{real}})$
\WHILE{train}
    \STATE Sample batch $(I, \mathbf{c}_{1:K}) \sim \mathcal{D}$ \quad // \textit{$I$: reference, $\mathbf{c}$: audio chunks}
    \STATE $\mathbf{h}_1 \leftarrow I$ \quad // \textit{Initialize autoregressive history}
    \FOR{chunk $k = 1$ \TO $K$}
        \STATE // \textit{Generate student chunk and teacher target}
        \STATE $\hat{\mathbf{x}}_k \leftarrow G_\theta(\mathbf{h}_k, I, \mathbf{c}_k)$ \quad // \textit{1-step generation}
        \STATE $\mathbf{x}^*_k \leftarrow \text{multiStepRollout}(V_{\text{real}}, \mathbf{h}_k, I, \mathbf{c}_k)$ \quad // \textit{Teacher target}
        
        \STATE // \textit{Update Generator}
        \STATE Sample student bridge time $\tau \sim \mathcal{U}(0, 1)$ and $\epsilon \sim \mathcal{N}(0, \mathbf{I})$
        \STATE $\mathbf{x}_\tau \leftarrow (1-\tau)I + \tau\hat{\mathbf{x}}_k + \sqrt{\tau(1-\tau)}\epsilon$ \quad // \textit{Bridge trajectory}
        \STATE $t \leftarrow \Phi(\tau) = 1 / (1 + \sqrt{(1-\tau)/\tau})$ \quad // \textit{SNR-aligned Time Transform}
        
        \STATE // \textit{Audio-driven Classifier-Free Guidance for Teacher}
        \STATE $v_{\text{cond}} \leftarrow V_{\text{real}}(\mathbf{x}_t, t, \mathbf{c}_k, I)$
        \STATE $v_{\text{uncond}} \leftarrow V_{\text{real}}(\mathbf{x}_t, t, \emptyset, I)$
        \STATE $v_{\text{real}}^{\text{cfg}} \leftarrow v_{\text{uncond}} + s \cdot (v_{\text{cond}} - v_{\text{uncond}})$
        
        \STATE // \textit{Compute Generator Losses}
        \STATE $\mathbf{x}_{\text{real}} \leftarrow \mathbf{x}_t - t \cdot v_{\text{real}}^{\text{cfg}}$ \quad // \textit{Flow-matching to data}
        \STATE $\mathbf{x}_{\text{fake}} \leftarrow \mathbf{x}_\tau - \tau \cdot V_{\text{fake}}(\mathbf{x}_\tau, \tau, \mathbf{c}_k, I)$ \quad // \textit{Bridge to data}
        
        \STATE $W \leftarrow 1 + \lambda_{\text{face}}M_{\text{face}} + \lambda_{\text{lip}}M_{\text{lip}}$ \quad // \textit{Spatial weights}
        \STATE $\mathcal{L}_{\text{DMD}} \leftarrow \text{distributionMatchingLoss}(\mathbf{x}_{\text{real}}, \mathbf{x}_{\text{fake}}, W)$
        \STATE $\mathcal{L}_{\text{Bridge}} \leftarrow \| W \odot (\hat{\mathbf{x}}_k - \mathbf{x}^*_k) \|_2^2$ \quad // \textit{Weighted MSE}
        \STATE $\mathcal{L}_{\text{LPIPS}} \leftarrow \text{LPIPS}(\hat{\mathbf{x}}_k, \mathbf{x}^*_k)$
        \STATE $\mathcal{L}_G \leftarrow \mathcal{L}_{\text{DMD}} + \lambda_{\text{br}}\mathcal{L}_{\text{Bridge}} + \lambda_{\text{lpips}}\mathcal{L}_{\text{LPIPS}}$
        \STATE $G_\theta \leftarrow \text{update}(G_\theta, \mathcal{L}_G)$
        
        \STATE // \textit{Update fake score estimation model (Critic)}
        \STATE Sample time step $\tau' \sim \mathcal{U}(0, 1)$
        \STATE $\tilde{\mathbf{x}} \leftarrow \text{stopgrad}(\hat{\mathbf{x}}_k)$
        \STATE $\mathbf{x}_{\tau'} \leftarrow (1-\tau')I + \tau'\tilde{\mathbf{x}} + \sqrt{\tau'(1-\tau')}\epsilon$
        \STATE $v_{\text{target}} \leftarrow (\mathbf{x}_{\tau'} - \tilde{\mathbf{x}}) / \tau'$ \quad // \textit{Target bridge velocity}
        \STATE $\mathcal{L}_{\text{critic}} \leftarrow \| V_{\text{fake}}(\mathbf{x}_{\tau'}, \tau', \mathbf{c}_k, I) - v_{\text{target}} \|_2^2$ 
        \STATE $V_{\text{fake}} \leftarrow \text{update}(V_{\text{fake}}, \mathcal{L}_{\text{critic}})$
        
        \STATE // \textit{Autoregressive Update}
        \STATE $\mathbf{h}_{k+1} \leftarrow \text{stopgrad}(\hat{\mathbf{x}}_k)$ \quad // \textit{Detach history chunks}
    \ENDFOR
\ENDWHILE
\end{algorithmic}
\end{algorithm}


\end{document}